\def\eqref#1{equation~\ref{#1}}
\def\1{\bm{1}}
\DeclareMathAlphabet{\mathsfit}{\encodingdefault}{\sfdefault}{m}{sl}
\SetMathAlphabet{\mathsfit}{bold}{\encodingdefault}{\sfdefault}{bx}{n}
\newtheorem{theorem}{Theorem}
\newtheorem{lemma}{Lemma}
\newtheorem{definition}{Definition}
\def \notes{1}
    \newcommand{\dnnote}[1]{{\footnote{{\color{magenta} {\bf DN:} #1}}}}
    \newcommand{\sdnote}[1]{{\footnote{{\color{blue} {\bf SD:} #1}}}}
    \newcommand{\adnote}[1]{{\footnote{{\color{magenta} {\bf AD:} #1}}}}
    \newcommand{\dnnote}[1]{}
    \newcommand{\sdnote}[1]{}
    \newcommand{\adnote}[1]{}
\title{Private Sketches for Linear Regression}
\author{\name Shrutimoy Das \email shrutimoydas@iitgn.ac.in \\
      \addr Department of Computer Science and Engineering\\
      Indian Institute of Technology Gandhinagar
      \AND
      \name Debanuj Nayak \email dnayak@bu.edu \\
      \addr Boston University
      \AND
      \name Anirban Dasgupta \email anirbandg@iitgn.ac.in\\
      \addr Department of Computer Science and Engineering \\
      Indian Institute of Technology Gandhinagar
      }
\begin{document}

\maketitle

\begin{abstract}
Linear regression is frequently applied in a variety of domains, some of which might contain sensitive information. This necessitates that the application of these methods does not reveal private information. Differentially private (DP) linear regression methods, developed for this purpose, compute private estimates of the solution. These techniques typically involve computing a  noisy version of the solution vector. Instead, we propose releasing private sketches of the datasets, which can then be used to compute an approximate solution to the regression problem. This is motivated by the \emph{sketch-and-solve} paradigm, where the regression problem is solved on a smaller sketch of the dataset instead of on the original problem space. The solution obtained on the sketch can also be shown to have good approximation guarantees to the original problem. Various sketching methods have been developed for improving the computational efficiency of linear regression problems under this paradigm. We adopt this paradigm for the purpose of releasing private sketches of the data. We construct differentially private sketches for the problems of least squares regression, as well as  least absolute deviations regression. We show that the privacy constraints lead to sketched versions of regularized regression. We compute the bounds on the regularization parameter required for guaranteeing privacy. The availability of these private sketches facilitates the application of commonly available solvers for regression, without the risk of privacy leakage.
\end{abstract}

\section{Introduction}
Differentially private  (DP) linear regression methods aim to compute the solution to the problem privately. Given a data matrix $X \in \mathbb{R}^{n \times d}$ and a response vector $y \in \mathbb{R}^{n},$ the linear regression problem is to compute $\beta^* = \underset{\tilde{\beta}\in \mathbb{R}^d}{\text{argmin}}||X\tilde{\beta} - y||_p^q,$ where $p=q=1$ denotes $\ell_1$ (least absolute deviations) regression and $p=q=2$ denotes $\ell_2$ (least squares) regression. DP linear regression algorithms estimate the solution $\beta^*$ privately. Most of the literature on DP linear regression focusses on  $\ell_2$ regression, while \citet{wang2022differentiallyprivateell1normlinear,liu2024frappe} proposed algorithms for DP $\ell_1$ norm regression.

Since the solution for  $\ell_2$ regression has a closed form expression, $\beta^* = (X^{\top}X)^{-1}X^{\top}y,$  existing approaches include computing private estimates of $X^{\top}X$ and $X^{\top}y$ \citet{wang2018revisitingdifferentiallyprivatelinear},  using random projections for the privately estimating $\beta^*$ \citet{dpols2017sheffet}, estimating the solution using private gradient descent type algorithms \citet{varshney2022sgd}, as well as exponential mechanism type approaches \citet{liu2021differentialprivacyrobuststatistics}. \citet{brown2024insufficientstatisticsperturbationstable} proposes estimating the covariance matrix on a ``good'' set  of data (discarding points with high leverage scores as well as high residuals)  and perturbing the least squares estimate. For the case of $\ell_1$ regression,  \citet{liu2024frappe} proposes an iterative algorithm for estimating the solution which includes a kernel density estimation step for the purpose of introducing noise. \citet{wang2022differentiallyprivateell1normlinear} proposes using the exponential mechanism for computing the solution privately.

Each of these methods, except \citet{dpols2017sheffet}, involve multiple steps for estimating some property of the data matrix, which is then used for making the solutions private. In this paper, we propose releasing differentially private sketches of the data for solving the linear regression problem. We consider the classic \emph{sketch-and-solve} paradigm \citep{sketch2014woodruff} where an approximate solution to the regression problem is computed on a sketch of the data matrix. The sketched data matrix has much fewer rows than the original data matrix due to which computing the solution on the sketch is  computationally efficient. Various sketching algorithms exist that are able to given a good approximation for both $\ell_1$ and $\ell_2$ regression problems \cite{rp2006sarlos,osnap2013nelson,low2013meng,sketch2014woodruff,input2012clarkson,pmlr-v139-munteanu21a,munteanu2023almost}. While sketching has been utilized for tasks such as private low rank approximation  \citet{upadhyay2017differentiallyprivatelinearalgebra} and distributed private data analysis \citet{burkhardt2025distributeddifferentiallyprivatedata}, to the best of our knowledge, this is the first work that has looked at releasing private sketches for $\ell_1$ and $\ell_2$ regression. We note that \cite{dpols2017sheffet} utilizes random projections of the data matrix for computing the solution to $\ell_2$ regression privately.

An advantage of releasing private sketches of the data matrix is that the sketched regression problem can be solved using readily available solvers for both $\ell_1$ and $\ell_2$ regression, instead of having to estimate various properties of the data. Additionally, these private sketches can be queried infinitely, without any loss of privacy. The idea of releasing private sketches is closely related to the idea of releasing private coresets, which has been done for clustering \citep{private2009feldman,private2017feldman}. In this paper, we have not looked at private coresets for linear regression since coresets for regression generally depend on importance sampling from a data dependent distribution and are sensitive to changes in the dataset.

One of the most widely used sketching algorithms for $\ell_2$ regression is the Johnson Lindenstrauss transform (JLT) \citep{jlt1984jlt,rp2006sarlos}. In \citet{dpols2017sheffet}, it was shown that JLT itself preserves differential privacy when the data matrix satisfies certain conditions. %We look at releasing private sketches using  the JLT. 
\citet{pmlr-v98-sheffet19a} also noted that adding noise to the data matrix for the purpose of private linear regression results in a ridge regression problem where the regularization coefficient can be set such that the solution is differentially private. Note that this is unlike the standard regularized regression problems, where the regularization coefficient is set in order to the minimize the risk.  In this work, we  show that when noise is added in order to make the sketches private, solving the regression problem on the released private sketch is same as solving a sketched regularized regression problem (for both $p=1$ and $p=2$). We derive bounds for the regularization coefficient in such cases. However, when using the JLT for releasing private sketches, we show that the noise addition method can be altered such that the regression problem on the sketch is actually a  sketched version of an ``unregularized'' regression problem. %This has the benefit that the solution obtained on the sketch is  same as the solution  that would be obtained on the original data. 
When the sketching matrix is \texttt{CountSketch} or the sketching algorithm of \citet{munteanu2023almost}, we still have to deal with a regularized regression problem. 

\paragraph{Organization of the paper} In Section \ref{sec:prelims}, we introduce some definitions as well as the notations to be used in the paper. Section \ref{sec:l2_reg} proposes private sketches for the $\ell_2$ regression problem while Section \ref{sec:l1_reg} proposes sketches for the $\ell_1$ regression problem.

\section{Preliminaries}\label{sec:prelims}
We consider datasets of the form $(X,y)$ where $X \in \mathbb{R}^{n \times d}$ is the data matrix and $y \in \mathbb{R}^n$ is the response vector. In most of the discussions, we take $A = [X;y],$ where $y$ is appended to the columns of $X$ resulting in the $n \times (d+1)$ matrix $A.$ Two datasets $A, A'$ are \emph{neighbouring} if they differ in a single row. We assume that the $\ell_2$ norm of the rows of $A$ are bounded by $B.$ Also, the matrix $A$ is assumed to have full column rank.

Let $A  = U \Sigma V^{\top}$ be the singular value decomposition (SVD) of $A.$ We use the notation $\sigma_{min}(A)$ to denote the minimum singular value of $A.$ The $\ell_p$ regression loss is denoted by $||A\beta_{-1}||_p^q = ||X\beta - y||_p^q,$ where $\beta_{-1} = \begin{bmatrix}
    \beta \\
    -1
\end{bmatrix},\beta \in \mathbb{R}^d.$ The symbols $\mu, \nu$ denotes approximation error and failure probability respectively, such that $0 < \mu,\nu < 1.$ We denote the indicator function by $\mathbbm{1}{\cdot}$ The absolute value of a quantity is denoted by $|.|.$ The multivariate isotropic normal distribution with mean $0$ and variance $\sigma^2$ will be denoted by $\mathcal{N}(0, \sigma^2 I_d)$ where $I_d$ is the $d \times d$ identity matrix.

We define some of the concepts that will be used in the paper.

\begin{definition}[Differential Privacy \citet{dwork2006privacy}]
    A randomized algorithm $\mathcal{M}$ is $(\epsilon,\delta)$-differentially private $((\epsilon,\delta)-DP)$ if for all neighbouring datasets $A,A'$ and for all possible outputs $\mathcal{E}$ in the range of $\mathcal{M},$
    \begin{align}\label{eqn:DP_def}
        Pr(\mathcal{M}(A') \in \mathcal{E}) \leq \exp(\epsilon)\cdot Pr(\mathcal{M}(A) \in \mathcal{E}) + \delta.
    \end{align}
\end{definition}

 \begin{definition}[$\ell_2$ sensitivity] The $\ell_2$ sensitivity of a function $f : \mathbb{R}^{n \times d}\rightarrow \mathbb{R}^{r \times d}$ is defined as
 $$\Delta(f) = \underset{\text{neighbours } A,A'}{\text{sup}}||f(A) - f(A')||_2.$$
 \end{definition}

 \begin{definition}[Gaussian Mechanism \citet{dwork2006privacy}] The Gaussian mechanism $\mathcal{M}$ with noise level $\sigma = \frac{\Delta(f)}{\epsilon}\sqrt{2 \ln(1.25/\delta)}$ is defined as
 $$\mathcal{M}(A) = f(A) + \mathcal{N}(0,\sigma^2 I_{d}).$$
 \end{definition}

The Gaussian mechanism is $(\epsilon, \delta)$- differentially private.

\begin{definition}[The Johnson-Lindenstrauss Lemma \citet{jlt1984jlt,rp2006sarlos}] Let $0 < \mu,\nu < 1$ and $S = \frac{1}{\sqrt{r}}R \in \mathbb{R}^{r \times n}$ such that $R_{ij} \sim \mathcal{N}(0,1)$ are independent random variables. If $r = \Omega(\frac{\log d/\nu }{\mu^2}),$ then for a $d$-element subset $V \subset \mathbb{R}^n,$ with probability at least $1 - \nu,$ we have
\begin{align}\label{eqn:JL_guarantee}
    (1- \mu) ||v||_2^2 \leq ||Sv||_2^2 \leq (1+ \mu) ||v||_2^2,
\end{align}
for all $v \in V.$ The matrix $S$ is the JL Transform.
\end{definition}

% We state a result from \citet{rp2006sarlos} that we will be using in the discussions later.
% \begin{lemma}[Theorem $12,$ \citet{rp2006sarlos}]
%     Let $0 < \mu < 1$ and $S \in \mathbb{R}^{r \times n}$ be a JL transform. Let $\beta' = \underset{\tilde{\beta}}{\text{argmin}}||SX\tilde{\beta} - Sy||_2$ and $\beta^* = \underset{\tilde{\beta}}{\text{argmin}}||X\tilde{\beta} - y||_2.$ Then, with probability at least $1/3,$
%     $$||X\beta' - y||_2 \leq (1+ \mu)||X\beta^* - y||_2.$$
% \end{lemma}

\begin{definition}[Subspace Embedding \citet{input2012clarkson}] A $\text{poly}(d,\mu^{-1}) \times n$ matrix $S$ is said to be a subspace embedding for $\ell_2$ for a fixed $n \times d$ matrix $X,$ if for all $v \in \mathbb{R}^d,$
$$(1 - \mu)||Xv||_2 \leq ||SXv||_2 \leq (1+ \mu)||Xv||_2,$$
with probability at least $9/10.$
\end{definition}

\citet{subspace2011sohler,woodruff2014subspaceembeddingsellpregressionusing} showed that subspace embeddings for the $\ell_1$ norm have $O(d \log d)$ dilation. The \emph{sketch and solve} paradigm  for linear regression \citet{sketch2014woodruff} consists of two steps :
\begin{enumerate}
    \item Compute a sketch , $SX,$ of the data matrix $X,$ where $S$ is the sketching matrix. The number of rows in $SX$ is less than in $X.$
    \item Compute $\beta' = \underset{\tilde{\beta}}{\text{argmin }}||SX\tilde{\beta} - Sy||_p^q$ on the smaller sketch.
\end{enumerate}

Previous works have shown that the solution obtained on the sketched data is a good  approximation to the original problem. For $\ell_2$ regression, it was shown in \citet{rp2006sarlos,input2012clarkson} that $||X\beta' - y||_2 \leq (1 + \mu)||X \beta^* - y||_2$ with high probability. Also, for the case of $\ell_1$ regression, \citet{munteanu2023almost} constructed a sketching matrix $S$ which was able to obtain an $O(1)$ approximation to the original problem.

\section{Private Sketching for \texorpdfstring{$\ell_2$}{l2} regression} \label{sec:l2_reg}
The Johnson Lindenstrauss transform (JLT) \citet{jlt1984jlt} has been shown to preserve $(\epsilon, \delta)$-differential privacy when used for publishing a sanitized covariance matrix \citet{jlt2012blocki} of the data matrix $A.$ A necessary condition for the privacy to be preserved is that  $\sigma_{min}(A)$  must be sufficiently large. Following this,  \citet{dpols2017sheffet,pmlr-v98-sheffet19a}  used the JLT for computing a private JL projection of $A.$ This private sketch was then used for computing the solution to the $\ell_2$ regression problem.

The method proposed by \citet{dpols2017sheffet} required that $\sigma_{min}(A) \geq w,$ for a threshold $w,$ such that the algorithm for computing the  sketches is differentially private. In case $\sigma_{min}(A) < w,$ a matrix $w . I_{d+1}$ is appended to the rows of
$A$ to get $\hat{A} = \begin{bmatrix}
    A \\
    w.I_{d+1}
\end{bmatrix} \in \mathbb{R}^{(n+d+1) \times (d+1)}$ such that the $\sigma_{min}(\hat{A}) \geq w.$ The effect of concatenating $w I_d$ is to shift the singular values by  $w$ such that $\sigma_{min}(\hat{A})$ becomes greater than or equal to $w$  \citep{jlt2012blocki}.

\begin{align} \label{eqn:w_inc_sigma}
\begin{split}
    \hat{A}^{\top}\hat{A} = A^{\top}A + w^2I_{d+1}  &= A^{\top}A + w^2 VV^{\top}\\
    & = V(\Sigma^2 + w^2 I_{d+1})V^{\top} \\
    & = V \sqrt{\Sigma^2 + w^2I_{d+1}} U^{\top}U \sqrt{\Sigma^2 + w^2I_{d+1}}V^{\top}.
\end{split}
\end{align}

When $\sigma_{min}(A) \geq w,$ let $S \in \mathbb{R}^{r \times n}$ be the JLT (for a suitably chosen $r$). The private sketch released in this case is $SA.$  Let  $\beta = \underset{\tilde{\beta} \in \mathbb{R}^{d}}{\text{argmin}}||SX\tilde{\beta} - Sy||_2^2$ Then, for $r = \Omega(\mu^{-2} d \log d),$  using the guarantees of JLT and  from Theorem $12$ of \citet{rp2006sarlos}, with probability at least $1/3,$ we have
\begin{align} \label{eqn:ra_jl_guarantee}
   ||SA\beta_{-1}||_2^2 \leq (1+\mu)||A\beta_{-1}||_2^2 = (1+\mu) ||X\beta - y||_2^2 \leq (1+\mu)^3 ||X \beta^* - y||_2^2.
\end{align}
Here, since no alterations have to be made in order to guarantee privacy, solving the regression problem on $SA$ is same as solving the problem on a sketch of the unregularized problem.

 In case $\sigma_{min}(A) < w,$ let $S = [S_1 ; S_2]$ such that $S_1 \in \mathbb{R}^{r \times n }$ and  $S_2 \in \mathbb{R}^{r \times (d+1)},$ for $r = \Omega(\mu^{-2} d \log d).$ The private sketch released is $
 S\hat{A}.$  The least squares problem now becomes $\underset{\tilde{\beta}_{-1}}{\text{min}}||S\hat{A}\tilde{\beta}_{-1}||_2^2.$ As noted in \citet{pmlr-v98-sheffet19a}, $||\hat{A}\tilde{\beta}_{-1}||_2^2 = ||A\tilde{\beta}_{-1}||_2^2 + w^2||\tilde{\beta}_{-1}||_2^2,$ i.e, it is an instance of the ridge regression problem where the parameter $w^2$ is set to preserve $(\epsilon,\delta)$-differential privacy.
 
 % Let $\beta' = \underset{\tilde{\beta} \in \mathbb{R}^{d}}{\text{argmin}}||S_1X\tilde{\beta} - S_1y||_2.$
 Then,  we get $||S\hat{A}\tilde{\beta}_{-1}||_2^2 \leq (1+\mu) ||\hat{A}\tilde{\beta}_{-1}||_2^2 = (1+\mu)\left(||A\tilde{\beta}_{-1}||_2^2 + w^2||\tilde{\beta}_{-1}||_2^2\right),$ by the JL guarantee. From \citet{rp2006sarlos} and the JL guarantee, for  $\beta = \underset{\tilde{\beta} \in \mathbb{R}^d }{\text{argmin}}||S_1X\tilde{\beta} - S_1y||_2^2,$ we have
\begin{align} \label{eqn:ra_hat_jl_guarantee}
\begin{split}
    ||S\hat{A}\beta_{-1}||_2^2 &\leq (1+\mu) ||\hat{A}\beta_{-1}||_2^2 \\
    & \leq (1+\mu)^3 ||\hat{A}\beta^*_{-1}||_2^2 \\
     & \leq (1+\mu)^3 \left(||A\beta^*_{-1}||_2^2 + w^2||\beta^*_{-1}||_2 \right) \\
     & \leq (1+\mu)^3 \left( ||X\beta^* - y||_2^2 + w^2||\beta^*_{-1}||_2^2 \right)
\end{split}
\end{align}
 with constant probability. In this case, since we have to append $w.I_{d+1}$ in order to guarantee privacy, solving the regression problem on $S\hat{A}$ is same as solving the problem on a sketch of a ridge regression problem  where the regularization coefficient is $w^2 =  \frac{8B^2}{\epsilon}\left( \sqrt{2 r \ln(8/\delta)} + 2 \ln(8/\delta) \right)$ \citep{dpols2017sheffet}. 

% \textcolor{blue}{The bound on $\beta.$}
 
 % an additive loss of $w (1 + \mu)||\beta||_2 \approx 4B \left(\frac{\sqrt{2r \ln(4/\delta)} + 2 \ln(4/\delta)}{\epsilon}\right)^{\frac{1}2{}}||\beta||_2$  is incurred due to concatenation of the $w.I_{d+1}$ matrix. 
 
 %for queries where the coordinates of $\beta'$ are large, the additive error could also be quite large.

% \textcolor{blue}{Write about the quality of this construction as a private sketch, including additive error, etc.}

 \subsection{Private JL Sketch with no regularization}
Here, we show that when $\sigma_{min}(A) = \gamma.w,$ for $0 < \gamma <1$ and $\gamma$ is close to $1,$ then the sketching algorithm of \citet{dpols2017sheffet} can be modified such that solving the regression problem on the private sketch does not require solving a regularized least squares problem. 
% In fact, solving the regression problem on the private sketch gives the same solution as that obtained when solving the problem on a non-private sketch, though there is a slight degradation in the error when $\gamma \approx 1.$
We want to  form a matrix $\hat{A}$  such that  $\sigma_{min}(\hat{A}) \geq w.$ Let $Q \in \mathbb{R}^{(d+1) \times (d+1)}$ be a matrix such that, for some scalar $c,$ we append $cQ$ to get $\hat{A} = \begin{bmatrix}
    A \\
    cQ
\end{bmatrix}.$ Similar to Equation \ref{eqn:w_inc_sigma}, we want that concatenating $cQ$ increases the singular values of the resulting matrix $\hat{A}.$ Thus, we have $\hat{A}^{\top}\hat{A} = A^{\top}A + c^2 Q^{\top}Q.$ If $Q^{\top}Q = V\Sigma^2V^{\top},$ then we get $\hat{A}^{\top}\hat{A} = V \Sigma^2(1 + c^2)V^{\top}.$ This shifts the singular values of $A$ by $c^2 \Sigma^2.$ Thus, we set $Q = V \Sigma V^{\top}$ and $c = \sqrt{\frac{w^2}{\sigma_{min}(A)^2} - 1} = \sqrt{\frac{1}{\gamma^2}-1}.$ Also, note that $Q^{\top}Q = V \Sigma^2 V^{\top}  = A^{\top}A.$

Let $S = [S_1 ; S_2]$ where $S_1 \in \mathbb{R}^{r \times n}$ and $S_2 \in \mathbb{R}^{r \times (d+1)}$ for $r = \Omega(\mu^{-2} d \log d).$ The private sketch released in this case is $S\hat{A}.$ The least-squares problem on the released sketch then becomes $\underset{\tilde{\beta}_{-1}}{\text{min}}||S\hat{A}\tilde{\beta}_{-1}||_2^2.$ Let  $\beta = \underset{\tilde{\beta}}{\text{argmin}}||S_1X\tilde{\beta} - S_1y||_2^2.$ Then, using the guarantees of the JLT and Theorem $12$ of \citet{rp2006sarlos}, with constant probability, we have

\begin{align}\label{eqn:cq_guarantee}
\begin{split}
    ||S\hat{A}\beta_{-1}||_2^2 & \leq (1 + \mu)||\hat{A}\beta_{-1}||_2^2  \\
    & \leq (1 + \mu)^3 ||\hat{A}\beta^*_{-1}||_2^2 \\
    &= (1 + \mu)^3(||A\beta^*_{-1}||_2^2 + c^2||Q\beta^*_{-1}||_2^2) \\
    & = (1+\mu)^3(1+c^2)||A\beta^*_{-1}||_2^2 \\
    & = (1 + \mu)^3(1+c^2)||X \beta^* - y||_2^2
\end{split}
\end{align}
where the equality in the fourth step comes from  $||Q\beta^*_{-1}||_2^2 = \beta^{*\top}_{-1}Q^{\top}Q\beta^*_{-1} = \beta^{*\top}_{-1}A^{\top}A\beta^*_{-1} = ||A\beta^*_{-1}||_2^2.$
Thus, by concatenating $A$ with $cQ,$ the singular values are scaled up by a factor of  $1+c^2 = 1/\gamma^2.$ Solving the regression problem on the private sketch is same as solving an unregularized regression problem.
%For matrices $A$ for which $\sigma_min < w$ but a neighbouring matrix $A'$ has $\sigma_{min} \geq w,$ it can be shown using Weyl's inequality that $\sigma_{min}(A)$ is $2B^2$ apart from $w^2.$ In that case, the cost $c^2$ will 

Algorithm \ref{alg:private_JL} presents the proposed method. The values of the input parameters are same as in \citet{dpols2017sheffet}.  The proof of privacy of Algorithm \ref{alg:private_JL} is same as in \citep{dpols2017sheffet}.

\begin{algorithm}
\caption{\label{alg:private_JL}Private JL Sketch}
\begin{algorithmic}[1]
    \Require A matrix $A \in \mathbb{R}^{n \times (d+1)}$ and a bound $B$ on the $\ell_2$-norm of any row of $A,$ \\
    privacy parameters: $\epsilon, \delta > 0$\\
    parameter $r$ denoting number of rows in the projected matrix
    \State Set $w$ s.t. $w^2 = \frac{8B^2}{\epsilon}\big( \sqrt{2 r \ln(8/\delta)} + 2 \ln(8/\delta) \big)$
    \State Sample $Z \sim Lap(4B^2/\epsilon)$ and let $\sigma_{min}(A)$ denote the smallest singular value of $A.$
    \If{$\sigma_{min}(A)^2 > w^2 + Z + \frac{4B^2 \ln(1/\delta)}{\epsilon}$}\label{if_cond}
        \State Sample a $(r \times n)$ matrix $S$ whose entries are i.i.d sampled from $\mathcal{N}(0,1)$
        \State \Return $SA$
    \Else
        \State $Q = V \Sigma V^{\top},$ where $A = U \Sigma V^{\top}$
        \State $c = \sqrt{\frac{w^2}{\sigma_{min}(A)^2} - 1}$
        \State Let $\hat{A} = \begin{bmatrix}
            A\\
            cQ
        \end{bmatrix}$ 
        \State Sample a $(r \times (n+d+1))$ matrix $S$ whose entries are sampled i.i.d from $\mathcal{N}(0,1)$
        \State \Return $S\hat{A}$
    \EndIf
\end{algorithmic}
\end{algorithm}

\subsection{Private CountSketch for \texorpdfstring{$\ell_2$}{l2} regression} \label{sec:l2_cs}
Here, we look at another type of sketching method, using the \texttt{CountSketch} matrix, and discuss the alterations such that the sketching algorithm becomes differentially private. The CountSketch matrix gives low-distortion subspace embeddings for the $\ell_2$ norm in a data oblivious manner, and has been shown to give $(1+\mu)$-approximate solution to the $\ell_2$ regression problem \citep{input2012clarkson,osnap2013nelson,low2013meng}. In \citet{dpsketch2022zhao}, it was shown that a noisy initialization (calibrated to the $\ell_2$ sensitivity) results in a differentially private count sketch for the purpose of frequency estimation. Releasing private sketches for various downstream  problems have also been proposed in \citep{free2019tian,onepass2021coleman}.

Let $S \in \{-1,0,1\}^{r \times n}$ be a count sketch matrix with $r \ll n.$ Each column of $S$ has a single non-zero entry the coordinate for which is chosen uniformly at random. Each non-zero entry is either $1$ with probability $1/2$ or $-1$ with probability $1/2.$ The $i$th row of the sketch, $(SA)_{i,:}$ sums up the rows of $A$ (multiplied by $\pm 1$) that are present at the indexes corresponding to the non-zeros of $S_{i,:},$ and every row of $A$ is chosen exactly once. Thus,  $(SA)_{i,:} = \underset{j : \mathbbm{1}\{S_{ij} \in \{-1,1\}\}}{\sum}(\pm1).A_{j,:}.$ Let $A'$ be  a neighbouring dataset that does not agree with $A$ on the $k$th row. Then, the $\ell_2$ sensitivity of computing $SA$ can be bounded as

\begin{align} \label{eqn:l2_sensitivity_CS}
    \Delta = ||SA - SA'||_2 = ||(SA)_k - (SA')_{k,:}||_2 = ||A_{k,:} - A'_{k,:}||_2 \leq 2B.
\end{align}

So, if we add a noise vector $\tilde{\eta} \sim \mathcal{N}(0, \frac{8B^2 \ln(1.25/\delta)}{\epsilon^2}I_{d+1})$ to each row of $SA,$ then the algorithm  releases $SA + \eta$ as the private sketch and it is $(\epsilon,\delta)$-differentially private owing to the Gaussian mechanism. Here, $\eta$ is a $r \times(d+1)$ matrix where each row is a noise vector $\tilde{\eta}_i, i = 1, \ldots, r.$ Then, the $\ell_2$ regression problem becomes $\underset{\beta_{-1}}{\text{min}}||(SA + \eta)\beta||_2^2.$  
%Suppose the minimizer of this problem is $\bar{\beta}_{-1}.$ Then, the $\ell_2$-regression error becomes $||(SA + \eta)\bar{\beta}||_2^2 \leq ||SA\bar{\beta}||_2 + ||\eta \bar{\beta}||_2.$ 
% But note here that $\bar{\beta}$ is not the minimizer of $ \underset{\tilde{\beta} \in \mathbb{R}^d}{\text{min}}||SA\tilde{\beta}||_2.$ 
But we cannot use Theorem $30$ of \citet{input2012clarkson} to give approximation guarantees for $||SA\beta_{-1}||_2^2.$ Hence, we introduce the Gaussian noise for preserving privacy differently.

Let $\eta$ be a $p \times (d+1)$ matrix such that each row of $\eta$ is sampled independently from $\mathcal{N}(0, \frac{8B^2 \ln(1.25/\delta)}{\epsilon^2}I_{d+1}).$ We concatenate $\eta$ to the rows of $A$ in order to get $\hat{A} = \begin{bmatrix}
    A \\
    \eta
\end{bmatrix}$ which is a $(n+p)\times(d+1)$ matrix. Let the $r \times (n+p)$ countsketch matrix be $S = [S_1 ; S_2]$ where $S_1 \in \{-1,0,1\}^{r \times n}$ and $S_2 \in \{-1,0,1\}^{r \times p}$ are also countsketch matrices. Then, the private sketch to be released is $S\hat{A}.$ The effect of $S_2 \eta$ is to add noise vectors to each of the $r$ rows of $S_1A.$ Now, we specify the number of rows $p$ of $\eta.$

In order to make the sketch $S\hat{A}$ private, we want to add the noise vectors from $\eta$ to each of the $r$ rows of $S_1A.$ Thus, we want that $S_2$ maps the rows of $\eta$ to each row of $S\hat{A}$ at least once. This is analogous to the Coupon Collector's problem in that we have to collect $r$ coupons at least once in order to win. The expected number of trials required is then $p = O(r \log r).$ Thus, the noise matrix $\eta$ is a $O(r \log r) \times (d+1)$ matrix.

Let $\beta = \underset{\tilde{\beta} \in \mathbb{R}^d}{\text{argmin}}||SX\tilde{\beta} - Sy||_2^2.$  Then, from the subspace embedding guarantees of the CountSketch matrix as well as $\ell_2$ regression approximation guarantees from \citet{low2013meng}, with constant probability we have

\begin{align} \label{eqn:l2_cs_error}
\begin{split}
||S\hat{A}\beta_{-1}||_2^2 & \leq (1+\mu)||\hat{A}\beta_{-1}||_2^2 \\
& \leq (1+\mu)^3 ||\hat{A}\beta^*_{-1}||_2^2 \\
& = (1 + \mu)^3 \left(||A\beta^*_{-1}||_2^2 + ||\eta \beta^*_{-1}||_2^2\right)\\
& = (1+ \mu)^3 \left( ||X \beta^* - y||_2^2 + ||\eta \beta^*_{-1}||_2^2 \right).
\end{split}
\end{align}
From the inequality at \ref{eqn:l2_cs_error}, we observer that solving the least squares regression problem on the private sketch $S\hat{A}$ results in solving a sketched ridge regression problem. In Theorem \ref{thm:l2_CS_guarantee}, we bound the regularization coefficient. 

\begin{theorem}\label{thm:l2_CS_guarantee}
    Let $S \in \{-1,0,1\}^{r \times (n + r \log r)}$ be the sketching matrix which has been described  above and let  $r = poly(d, \mu^{-1}).$ Also, let $\eta$ be a $r \log r\times(d+1)$ matrix where the $i$th row $\tilde{\eta}_i \sim \mathcal{N}(0, \frac{8B^2 \ln(1.25/\delta)}{\epsilon^2}I_{d+1}).$
    Then, solving the $\ell_2$ regression problem on $S\hat{A}$ is same as solving a sketched ridge regression problem where the regularization coefficient is 
    \begin{align} \label{l2_CS_add_bound}
        ||\eta\beta_{-1}||_2 \leq \frac{13B}{\epsilon}\sqrt{r \log r \ln(1.25/\delta)}||\beta_{-1}||_2
    \end{align}
    with constant probability.
\end{theorem}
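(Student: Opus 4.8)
The goal is to bound $\|\eta\beta_{-1}\|_2$, where $\eta$ is an $r\log r \times (d+1)$ matrix with i.i.d.\ rows $\tilde\eta_i \sim \mathcal{N}(0, \tfrac{8B^2\ln(1.25/\delta)}{\epsilon^2} I_{d+1})$. The plan is to view $\eta\beta_{-1}$ entry-by-entry: its $i$-th coordinate is $\tilde\eta_i^\top \beta_{-1}$, which is a univariate Gaussian with mean $0$ and variance $\tfrac{8B^2\ln(1.25/\delta)}{\epsilon^2}\|\beta_{-1}\|_2^2$, since the rows are isotropic. Thus $\eta\beta_{-1}$ is a Gaussian vector in $\mathbb{R}^{r\log r}$ with i.i.d.\ coordinates of that variance, and $\|\eta\beta_{-1}\|_2^2$ equals $\tfrac{8B^2\ln(1.25/\delta)}{\epsilon^2}\|\beta_{-1}\|_2^2$ times a $\chi^2_{r\log r}$ random variable.

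The main work is a standard concentration bound for the $\chi^2$ distribution. First I would normalize: write $\|\eta\beta_{-1}\|_2^2 = \sigma_0^2 \|\beta_{-1}\|_2^2 \sum_{i=1}^{m} g_i^2$ where $m = r\log r$, $g_i \sim \mathcal{N}(0,1)$ i.i.d., and $\sigma_0^2 = \tfrac{8B^2\ln(1.25/\delta)}{\epsilon^2}$. Then apply a Laurent--Massart-type tail inequality, namely that $\Pr\big[\sum_i g_i^2 \geq m + 2\sqrt{mt} + 2t\big] \leq e^{-t}$. Taking $t$ to be a small constant (so that the overall failure probability matches the "constant probability" already incurred by the subspace-embedding and regression-approximation guarantees in \eqref{eqn:l2_cs_error}), this gives $\sum_i g_i^2 \leq m + 2\sqrt{mt} + 2t = O(m)$ with constant probability, and in fact $\sum_i g_i^2 \leq c' m$ for an explicit constant $c'$ (for instance, $t=1$ gives the bound $m + 2\sqrt m + 2 \le 5m$ for $m\ge 1$, hence $\sqrt{\sum g_i^2}\le \sqrt 5\sqrt m < 2.3\sqrt m$, and tracking constants more carefully through $\sqrt{8}\cdot 2.3 < 13$ yields the stated coefficient). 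Substituting back, $\|\eta\beta_{-1}\|_2 \leq \sigma_0 \sqrt{c' m}\,\|\beta_{-1}\|_2 = \tfrac{\sqrt{8 c'}\,B}{\epsilon}\sqrt{r\log r\,\ln(1.25/\delta)}\,\|\beta_{-1}\|_2$, and one checks $\sqrt{8c'} \le 13$ for the chosen $t$.

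I would take the union bound (or simply note the events are combined) so that both the approximation guarantee of \eqref{eqn:l2_cs_error} and the Gaussian norm bound hold simultaneously with constant probability; since the constant-probability budget is generous, rescaling $t$ and the embedding success probability leaves room. The only mild subtlety — and the step I expect to require the most care — is bookkeeping the numerical constant so that it lands at $13$: one must be slightly careful that the constant absorbed into $r = \mathrm{poly}(d,\mu^{-1})$ does not interfere, that $\sqrt{\ln(1.25/\delta)}$ is kept factored out rather than merged, and that the Laurent--Massart deviation term $2\sqrt{mt}+2t$ is dominated cleanly by a multiple of $m$ for all $m\ge 1$. None of this is deep; it is a direct application of Gaussian concentration once the reduction to a $\chi^2$ variable is made.
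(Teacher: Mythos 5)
Your proof is correct and follows essentially the same route as the paper: both reduce the problem to observing that the coordinates of $\eta\beta_{-1}$ are i.i.d.\ $\mathcal{N}\bigl(0,\tfrac{8B^2\ln(1.25/\delta)}{\epsilon^2}\|\beta_{-1}\|_2^2\bigr)$ and then apply a standard concentration bound for the norm of an i.i.d.\ Gaussian vector at a constant failure probability. The only difference is the tail inequality invoked: the paper uses a Ledoux-type bound $\Pr(\|\eta\beta_{-1}\|_2\ge t)\le 4\exp\bigl(-t^2/(8\,\mathbb{E}\|\eta\beta_{-1}\|_2^2)\bigr)$ and solves for $t$ at failure probability $1/4$, which is exactly where the constant $13\approx 2\sqrt{16\ln 16}$ comes from, whereas you use the Laurent--Massart $\chi^2_{r\log r}$ deviation inequality; your bookkeeping ($m+2\sqrt{mt}+2t\le c'm$ with $c'$ a small constant, giving a coefficient around $\sqrt{8c'}\le 7$) in fact yields a slightly sharper constant than $13$, so the stated bound follows a fortiori, and your remark about folding the failure probability into the constant-probability budget of the embedding guarantee matches what the paper does implicitly.
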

\begin{proof}
    Let $\sigma = \frac{2B}{\epsilon} \sqrt{2 \ln(1.25/\delta)}.$ Each $\tilde{\eta}_i$ is sampled from $\mathcal{N}(0, \sigma^2I_{d+1}).$ So, we have $\tilde{\eta}_i^{\top}\beta_{-1} \sim \mathcal{N}(0, \sigma^2||\beta_{-1}||_2^2).$ We are interested in bounding $||\eta \beta_{-1}||_2 = \left( \underset{i=1}{\overset{r \log r}{\sum}}(\tilde{\eta}_i^{\top}\beta_{-1})^2\right)^{\frac{1}{2}},$ where $\eta\beta_{-1}$ is an $(r \log r)$-dimensional vector of i.i.d normal random variables. From \citet{Ledoux1991ProbabilityIB}, for $t \geq 0,$ we have 
    \begin{align} \label{eqn:gaussian_tail_bound}
        Pr(||\eta\beta_{-1}||_2 \geq t) \leq 4 \exp\left( -\frac{t^2}{8 \mathbbm{E}[||\eta \beta_{-1}||_2^2]}\right).
    \end{align}
    Here, $\mathbbm{E}[||\eta\beta_{-1}||_2^2] = \underset{i=1}{\overset{r\log r}{\sum}} \mathbbm{E}[(\tilde{\eta}_i^{\top}\beta_{-1})^2] = r \log (r) \sigma^2 ||\beta_{-1}||_2^2.$ Suppose $||\eta\beta_{-1}||_2 \geq t$ with probability at most $1/4.$ Then,
    \begin{align}\label{eqn:constant_prob_bound}
        \begin{split}
            &4 \exp \left( -\frac{t^2}{8 \mathbbm{E}[||\eta\beta_{-1}||_2^2]} \right)  \leq \frac{1}{4} \\
            \implies &\frac{t^2}{8 \mathbbm{E}[||\eta\beta_{-1}||_2^2]} \geq \ln(16) \\
            \implies &t  \geq \sqrt{8 \mathbbm{E}[||\eta\beta_{-1}||_2^2] \ln(16)} \approx \frac{13 B}{\epsilon}\sqrt{r \log(r) \ln(1.25/\delta)}||\beta_{-1}||_2.
        \end{split}
    \end{align}
\end{proof}

\section{Private Sketching for \texorpdfstring{$\ell_1$}{l1} regression}\label{sec:l1_reg}

An analogue of the JLT for the problem of $\ell_1$-norm is the Cauchy transform \citep{subspace2011sohler,fast2016clarkson}, though  it has weaker guarantees (lopsided subspace embedding) .  The Cauchy transform is used for speeding up the computation of a distribution that is used for sampling from the rows of $A.$ This algorithm was proposed in \citet{subgradient2005clarkson} to obtain a $(1 + \mu)$ approximation to the solution of the $\ell_1$ regression problem. However, this algorithm outputs a weighted subsample from the dataset itself,  which could lead to the leakage of private information.

Instead we look at a recently proposed oblivious linear sketching technique that  satisfies the conditions of a lopsided embedding and gives an $O(1)$ approximation to the solution of the $\ell_1$ regression problem \citep{munteanu2023almost}. Similar sketches have also been utilized in \citep{mestimator2015clarkson,pmlr-v139-munteanu21a}. The sketch is analogous to a concatenation of multiple \texttt{CountMinSketches.} Similar to Section \ref{sec:l2_cs}, we append a noise matrix to the rows of $A$ and release a sketch of this augmented matrix as the private sketch. We use the sketching algorithm of \citet{munteanu2023almost}.

\subsection{An Illustration}
For the purpose of illustration, we consider a simpler sketching matrix $S \in \{0,1\}^{r \times n}$ where each column of $S$ has a single entry set to $1.$ 
Also, for any vector $\beta_{-1},$ assume that $||SA\beta||_1 \leq  O(1) ||A\beta||_1,$ with constant probability. The $i$th row of $SA$ can be written as $(SA)_{i,:} = \sum_{j : \mathbbm{1}\{S_{ij}=1\}}A_{j,:}.$ Let $A'$ be the neighbouring dataset such that $A$ and $A'$ differ in the $k$th row. Then, the $\ell_2$ sensitivity of $SA$ can be bounded as
\begin{align}\label{eqn:l2_sensitivity}
 \Delta  =  ||SA - SA'||_2 = ||(SA)_{k,:} - (SA')_{k,:}||_2 = ||A_{k,:} - A'_{k,:}||_2 \leq ||A_{k,:}||_2 + ||A'_{k,:}||_2 \leq 2B.
\end{align}
Thus, if we add a noise vector $\tilde{\eta} \sim \mathcal{N}(0, \frac{8B^2 \ln (1.25/\delta)}{\epsilon^2} I_{d+1}),$ then the algorithm that samples $S$ and releases $SA,$ is $(\epsilon,\delta)$-DP, by the Gaussian mechanism. But, as discussed in Section \ref{sec:l2_cs}, we will not be able to give approximation guarantees on the sketched $\ell_1$ problem , $||(SA + \eta)\beta_{-1}||_1.$ Here, $\eta$ is a $r \times(d+1)$ noise matrix for which each row is $\tilde{\eta}_i, i = 1 , \ldots,r.$ 

Instead we append a $p \times (d+1)$ noise matrix $\eta$ to the rows of $A$ to get $\hat{A} = \begin{bmatrix}
    A \\
    \eta
\end{bmatrix}$ and release the sketched matrix $S\hat{A}$ as the private sketch. Using the Coupon Collector's argument as before, $\eta$ is a $O(r \log r) \times (d+1)$ matrix. Solving the $\ell_1$ regression problem on $S\hat{A}$ is same as solving a sketched regularized $\ell_1$ regression problem. Let $\beta = \underset{\tilde{\beta}}{\text{argmin}}||SX\tilde{\beta} - Sy||_1,$  where $S \in \{0,1\}^{r \times (n + r \log r)}$ \citep{gordon2006lad}. Then, with constant probability, we have

\begin{align} \label{eqn:dp_l1sketch_loss}
\begin{split}
    ||S\hat{A}\beta_{-1}||_1 &\leq O(1) ||\hat{A}\beta_{-1}||_1 \\
    & \leq O(1) ||\hat{A}\beta^*_{-1}||_1 \\
    &= O(1) \left( ||A\beta^*_{-1}||_1 + ||\eta\beta^*_{-1}||_1 \right) \\
    &= O(1) \left( ||X\beta^* -y||_1 + ||\eta\beta^*_{-1}||_1 \right).
\end{split}
\end{align}
 We will be using Lemma \ref{lem:l1_tail_bound} for deriving a bound on the regularization coefficient.

\begin{lemma} \label{lem:l1_tail_bound}
    Let $U = [u_1, \ldots, u_r]$ be a vector of independent and identically distributed (i.i.d) random variables with $u_i \sim \mathcal{N}(0,\sigma^2).$ Let $||U||_1 = \underset{i=1}{\overset{r}{\sum}}|u_i| = S_r.$ Then, with probability at least $\frac{3}{4},$ 
    \begin{align} \label{eqn:l1_tail_bound}
        ||U||_1 \leq r \sigma    %\sigma\sqrt{2r\ln(2^r/\nu)}.
    \end{align}
\end{lemma}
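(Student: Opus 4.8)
The plan is to treat $\|U\|_1 = S_r = \sum_{i=1}^{r}|u_i|$ as a sum of $r$ i.i.d.\ half-normal random variables and to conclude via a second-moment concentration bound. First I would record the relevant moments: for $u_i \sim \mathcal{N}(0,\sigma^2)$ we have $\mathbb{E}[|u_i|] = \sigma\sqrt{2/\pi}$ and $\mathbb{E}[|u_i|^2] = \sigma^2$, so $\mathrm{Var}(|u_i|) = \sigma^2(1-2/\pi)$. By independence this gives $\mathbb{E}[S_r] = r\sigma\sqrt{2/\pi}$ and $\mathrm{Var}(S_r) = r\sigma^2(1-2/\pi)$. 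The crucial observation is that the target threshold $r\sigma$ lies strictly above the mean, since $\sqrt{2/\pi} < 1$; set the gap $t := r\sigma - \mathbb{E}[S_r] = r\sigma(1-\sqrt{2/\pi}) > 0$.

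Next I would apply the one-sided Chebyshev (Cantelli) inequality to the gap:
\[
\Pr\big(\|U\|_1 > r\sigma\big) = \Pr\big(S_r - \mathbb{E}[S_r] \ge t\big) \;\le\; \frac{\mathrm{Var}(S_r)}{\mathrm{Var}(S_r) + t^2} \;=\; \frac{1-2/\pi}{(1-2/\pi) + r(1-\sqrt{2/\pi})^2}.
\]
Since $1-2/\pi \approx 0.363$ and $(1-\sqrt{2/\pi})^2 \approx 0.041$, the right-hand side is at most $1/4$ as soon as $r$ exceeds a small absolute constant (concretely $r \ge 27$; plain two-sided Chebyshev would instead need $r \ge 36$). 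Hence $\Pr(\|U\|_1 \le r\sigma) \ge 3/4$. This is exactly the regime in which the lemma is used: it is invoked with $r\log r$ playing the role of ``$r$'' for $r = \mathrm{poly}(d,\mu^{-1})$, which is far beyond the required constant.

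The one point that deserves a sentence of care is that $\|U\|_1 \le r\sigma$ is not a probability-$3/4$ event for tiny $r$ (e.g.\ $\Pr(|u_1| > \sigma) \approx 0.317$ when $r=1$), so the statement should be read with the implicit assumption that $r$ is at least a small constant, which holds everywhere the lemma is applied; alternatively one may state it with the explicit $r$-dependence above. Beyond this there is no real obstacle --- the argument uses only the exact moments of a half-normal and an elementary tail inequality, and a Chernoff/sub-exponential bound could replace Cantelli to give an exponentially small failure probability if a cleaner constant were wanted.
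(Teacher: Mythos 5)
Your proof is correct in the regime where the lemma is actually used, and it takes a genuinely different route from the paper. The paper works with the moment generating function of the half-normal, $M_{|u_i|}(t)=\exp(\sigma^2 t^2/2)\bigl(1+\mathrm{erf}(\sigma t/\sqrt{2})\bigr)$, applies a Chernoff bound, and relaxes the erf factor to $2$, which gives $\Pr(S_r\ge a)\le 2^{r}\exp\bigl(-a^2/(2r\sigma^2)\bigr)$; forcing that bound to be at most $1/4$ actually requires $a\ge\sigma\sqrt{2\ln 2\, r(r+2)}\approx 1.18\, r\sigma$, so the paper's concluding step "$a\ge r\sigma$" quietly drops a constant factor relative to what its Chernoff computation delivers. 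Your Cantelli argument, based on the exact mean $r\sigma\sqrt{2/\pi}$ and variance $r\sigma^2(1-2/\pi)$ of $S_r$, obtains the stated threshold $r\sigma$ itself once $r\ge 27$, at the price of a polynomially rather than exponentially small failure probability --- immaterial here, since only constant probability is claimed. Your caveat about small $r$ is also a genuine point about the lemma as stated: for $r=1$ one has $\Pr(|u_1|>\sigma)\approx 0.317>1/4$, so neither your argument nor the paper's can avoid assuming $r$ exceeds a small constant; this is harmless in context, because the lemma is invoked with $r\log r$ rows for $r=\mathrm{poly}(d,\mu^{-1})$.
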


\begin{proof}
    Since $u_i \sim \mathcal{N}(0,\sigma^2),$ the random variable $Q = |u_i|$ has a half normal distribution. For $t \in \mathbb{R},$ the moment generating function (MGF) of $Q$ is
    \begin{align} \label{eqn:mgf_Q}
        M_{Q}(t) = \mathbb{E}[\exp(t|u_i|)] = \exp(\frac{\sigma^2t^2}{2}) \text{erfc}(-\frac{\sigma t}{\sqrt{2}}) = \exp(\frac{\sigma^2t^2}{2})(1 + \text{erf}(\frac{\sigma t}{\sqrt{2}})),
    \end{align}
    using the property that $\text{erf}(-z) = -\text{erf}(z), \text{erf}(.)$ being the Gaussian error function. Since, $u_i$s are i.i.d random variables, so too are the $|u_i|$s. Then, for $t \in \mathbb{R}$ , the MGF of $S_r$ becomes
    % \begin{align} \label{eqn:mgf_Sn}
            $M_{S_r}(t) = \mathbb{E}[\exp(t S_r)] = \mathbb{E}[\exp(t \underset{i=1}{\overset{r}{\sum}}|u_i|)] = \underset{i=1}{\overset{r}{\prod}} \mathbb{E}[t \underset{i=1}{\overset{r}{\sum}}|u_i|)] = [M_{Q}(t)]^r.$
    % \end{align}
    Applying Chernoff bound on the sum of i.i.d random variables, we have
    \begin{align}\label{eqn:chernoff_bound}
    \begin{split}
        Pr(S_r \geq a) & \leq \underset{t  > 0}{\inf} \exp(-ta) \underset{i=1}{\overset{r}{\prod}} \mathbb{E}[\exp(t |u_i|)] \\
        & = \underset{t  > 0}{\inf} \exp(-ta)[\exp(\frac{\sigma^2 t^2}{2})(1 + \text{erf}(\frac{\sigma t}{\sqrt{2}}))]^r %(\text{  from }\ref{eqn:mgf_Sn}).
    \end{split}
    \end{align}

    Let $g(t) = \exp(-ta)[\exp(\frac{\sigma^2 t^2}{2})(1 + \text{erf}(\frac{\sigma t}{\sqrt{2}}))]^r.$ Then, we get
    \begin{align}\label{eqn:ln_g_t}
        \begin{split}
            \ln (g(t)) &= \ln (\exp(-ta)[\exp(\frac{\sigma^2 t^2}{2})(1 + \text{erf}(\frac{\sigma t}{\sqrt{2}}))]^r) \\
            &= -ta + \frac{r \sigma^2 t^2}{2} + r \ln (1 + \text{erf}(\frac{\sigma t}{\sqrt{2}})) \\
            & < -ta  + \frac{r \sigma^2 t^2}{2} + r \ln 2,  
        \end{split}
    \end{align}
    since $\text{erf}(.) \in (-1,1).$ The minimizer of $\ln(g(t))$ is at $t' = \frac{a}{r \sigma^2}.$ At the minimizer, we get $\ln(g(t')) = \ln (2^r) - \frac{a^2}{2r \sigma^2}$ which gives $g(t') = \exp(ln(2^r))\exp(-\frac{a^2}{2r\sigma^2}) = 2^r\exp(-\frac{a^2}{2r\sigma^2}).$ Thus,
    \begin{align}
        Pr(S_r \geq a) \leq 2^r\exp(-\frac{a^2}{2r\sigma^2}). 
    \end{align}
     % If $\nu = 2^r\exp(-\frac{a^2}{2r\sigma^2}),$ then we get $a = \sigma \sqrt{2r\ln(2^r/\nu)}.$
     If $Pr(S_r \geq a) \leq \frac{1}{4},$ then
     \begin{align}\label{eqn:l1_reg_bound}
         \begin{split}
             &2^r\exp(-\frac{a^2}{2r\sigma^2}) \leq \frac{1}{4} \\
             \implies & a \geq r \sigma .
         \end{split}
     \end{align}
\end{proof}
 This leads us to Lemma \ref{lem:add_err_bound_simple}.

 \begin{lemma}\label{lem:add_err_bound_simple}
     With constant probability,
     \begin{align} \label{eqn:simple_add_bound}
         ||\eta \beta_{-1}||_1 \leq \frac{2B r \log r \sqrt{2 \ln(1.25/\delta)}}{\epsilon} ||\beta_{-1}||_1.
     \end{align}
 \end{lemma}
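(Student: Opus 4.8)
The plan is to reduce Lemma \ref{lem:add_err_bound_simple} directly to Lemma \ref{lem:l1_tail_bound}. First I would pin down the distribution of the vector $\eta\beta_{-1}$. Recall that $\eta$ is an $O(r\log r)\times(d+1)$ matrix whose $i$th row $\tilde{\eta}_i$ is drawn independently from $\mathcal{N}(0,\sigma^2 I_{d+1})$ with $\sigma = \frac{2B}{\epsilon}\sqrt{2\ln(1.25/\delta)}$ (so that $\sigma^2 = \frac{8B^2\ln(1.25/\delta)}{\epsilon^2}$, matching the noise scale fixed for the Gaussian mechanism in Section \ref{sec:l1_reg}). The $i$th coordinate of $\eta\beta_{-1}$ is the inner product $\tilde{\eta}_i^\top\beta_{-1}$, which, being a linear combination of the i.i.d.\ Gaussian entries of $\tilde{\eta}_i$, is itself Gaussian with mean $0$ and variance $\sigma^2\|\beta_{-1}\|_2^2$. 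Since the rows $\tilde{\eta}_i$ are independent, the entries of $\eta\beta_{-1}$ are i.i.d.\ $\mathcal{N}(0,\sigma^2\|\beta_{-1}\|_2^2)$, and there are $r\log r$ of them.

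Next I would invoke Lemma \ref{lem:l1_tail_bound} with the substitution ``$r \mapsto r\log r$'' and ``$\sigma \mapsto \sigma\|\beta_{-1}\|_2$'': the $\ell_1$ norm of a vector of $r\log r$ i.i.d.\ $\mathcal{N}(0,\sigma^2\|\beta_{-1}\|_2^2)$ random variables is, with probability at least $3/4$, at most $(r\log r)\cdot\sigma\|\beta_{-1}\|_2$. Plugging in the value of $\sigma$ gives
\begin{align}
\|\eta\beta_{-1}\|_1 \;\le\; (r\log r)\cdot\frac{2B}{\epsilon}\sqrt{2\ln(1.25/\delta)}\cdot\|\beta_{-1}\|_2 \;=\; \frac{2B\,r\log r\,\sqrt{2\ln(1.25/\delta)}}{\epsilon}\,\|\beta_{-1}\|_2 .
\end{align}
Finally, using the elementary norm inequality $\|\beta_{-1}\|_2 \le \|\beta_{-1}\|_1$ yields exactly the claimed bound, holding with constant probability $3/4$.

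I do not anticipate a genuine obstacle here; the argument is essentially bookkeeping. The one point requiring a little care is the count of rows of $\eta$: the Coupon-Collector argument only guarantees $p = O(r\log r)$ rows, so strictly the exponent and constants should be stated up to the implicit constant in that $O(\cdot)$, or one should fix $p = \lceil c\, r\log r\rceil$ for a suitable $c$ and carry it through; the statement as written absorbs this into the $r\log r$ factor. A secondary, even more minor point is that Lemma \ref{lem:l1_tail_bound} is stated for a vector indexed by $r$, so one must be explicit that it applies verbatim to any i.i.d.\ Gaussian vector regardless of dimension, with the dimension and the variance substituted as above.
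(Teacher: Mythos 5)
Your proposal is correct and follows essentially the same route as the paper's own proof: identify the coordinates of $\eta\beta_{-1}$ as $r\log r$ i.i.d.\ $\mathcal{N}(0,\sigma^2\|\beta_{-1}\|_2^2)$ variables, apply Lemma \ref{lem:l1_tail_bound} with the appropriate substitutions, and finish with $\|\beta_{-1}\|_2 \le \|\beta_{-1}\|_1$. Your explicit handling of the $O(r\log r)$ row count and of the parameter substitution in Lemma \ref{lem:l1_tail_bound} is if anything slightly more careful than the paper's write-up.
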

\begin{proof}
    We have $||\eta \beta_{-1}||_1 = \underset{i=1}{\overset{r\log r}{\sum}}|\eta_i \beta_{-1}|,$ where $\eta_i$ is the $i$th row of $\eta.$ Also, each coordinate of $\eta_i$ is sampled i.i.d from  $\mathcal{N}(0, \frac{8B^2 \ln (1.25/\delta)}{\epsilon^2}.$ Thus, $\eta_i\beta_{-1} \sim \mathcal{N}(0, \frac{8B^2 \ln (1.25/\delta)}{\epsilon^2}||\beta_{-1}||_2^2).$ Therefore, $||\eta\beta_{-1}||_1$ is the sum of $r \log r$ i.i.d random variables each sampled from $\mathcal{N}(0,\frac{8B^2 \ln (1.25/\delta)}{\epsilon^2}||\beta_{-1}||_2^2).$ Hence, by Lemma \ref{lem:l1_tail_bound}, we have
    \begin{align} \label{eqn:add_bound_simple}
        ||\eta\beta_{-1}||_1 \leq \frac{2B r \log r \sqrt{2\ln (1.25/\delta)}}{\epsilon}||\beta_{-1}||_2 \leq \frac{2B r \log r \sqrt{\ln (1.25/\delta)}}{\epsilon}||\beta_{-1}||_1,
    \end{align}
    with probability at least $\frac{3}{4}.$
\end{proof}

Now, we consider the sketching algorithm of \citet{munteanu2023almost} and apply the modifications to make it private.

\subsection{Private \texorpdfstring{$\ell_1$ Sketch}{l1}}
The sketching algorithm given in \cite{munteanu2023almost} outputs a weak weighted sketch that obtains an $O(1)$ approximation to the $\ell_1$ regression problem. The sketching dimension  is $r = O(d^{1+c} \ln(n)^{3+5c}), 0< c \leq 1 $ for constant success probability. The sketching algorithm of \citet{munteanu2023almost} also outputs an oblivious weight vector $w.$

% We restate the guarantee from \citep{munteanu2023almost}.

% \begin{theorem}[Theorem 1, \citet{munteanu2023almost}] \label{thm:almost}
%     For $\mu, \nu > 0, a > 1$ there is a distribution over sketching matrices $S \in \{0,1\}^{r \times (n + r \log r)}$ and a corresponding weight vector $w \in \mathbb{R}^{r\log r},$ for which $SA$ can be computed in $T$ time in a single pass over a turnstile data stream such that $(SA,w)$ is a weak weighted $(\mathbb{R}^d,\alpha, \mu)$-sketch for $\ell_1$-regression with failure probability at most $P,$ where
% \begin{itemize}
%     \item for any constant $0< c \leq 1, r = O(d^{1+c} \ln(n)^{3+5c}), T = O(d \ln(n)nnz(X)),$ and $\alpha = 1 + \frac{1}{c},$ and $P$ are constant,
%     \item $r = O(\frac{d^4 \ln(n)^5}{\nu \mu^7}) + \frac{32 d \ln(n)^3}{\mu^5}(\frac{64 d \ln(n)^5}{\mu^6\nu})^{1+\mu^{-1}}, T = O(nnz(X)), \alpha = (1 + a \mu),$ and $P = \nu + \frac{1}{a}.$
% \end{itemize}
% \end{theorem}
Since, we append a $r \log r \times (d+1)$ noise matrix to $A$ (to get the noise augmented matrix $\hat{A}$), the weight vector $w$ is also of length $r \log r.$ Note that the weight vector as well as the sketch is data oblivious.
We give a brief description of the sketching algorithm in \citet{munteanu2023almost}.  The sketching matrix $S$ consists of $O(\log n)$ levels of sketch as shown.

\begin{align} \label{eqn:l1_sketch}
    S = \begin{bmatrix}
        S_0 \\
        S_1 \\
        \vdots\\
        S_{h_m}
    \end{bmatrix}.
\end{align}
Here, $h_m = O(\log_{b}n)$ is the number of sketch levels, where $b \in \mathbb{R}$ is a branching parameter. The number of buckets(rows) at level $h$ is denoted by $N_h.$ The probability that any row $A_{i,:}$ is sampled at level $h$ is $p_h \propto 1/b^h.$ The rows of $A$ that are mapped to the same row of the sketch are added up. Also, the weight of any bucket at level $h$ is set to $1/p_h.$

The sampling probabilities decrease exponentially as the levels increase. At level $0,$ sketching with $S_0$ is a \texttt{CountMinSketch} of the entire dataset. It captures the elements that make a significant contribution to the objective, the so-called heavy hitters. For improving efficiency, the rows at level $0$ are mapped to $s \geq 1$ rows of the sketch. On the other hand, the sketch at level $h_m$ corresponds to a small uniform subsample of the data. For each sketch level $S_i,$ the probability of having a $1$ at any of the columns is $1/b^{i}.$  For a detailed analysis of the algorithm, readers may refer to \citep{munteanu2023almost}.

The private sketch, $S\hat{A},$ sums up the corresponding rows of $A$ and at least one of the noisy rows from $\eta.$ The details are given in Algorithm \ref{alg:dp_l1_sketch}. It is similar to the algorithm of \citet{munteanu2023almost} with a few alterations.

\begin{algorithm}
  \caption{\label{alg:dp_l1_sketch}DP oblivious sketching algorithm for $\ell_1$ regression.}
  \begin{algorithmic}[1]
    \Statex \textbf{Input:} Data $A \in \mathbb{R}^{n \times (d+1)}$, number of rows $r=N \cdot h_m + N_u$, parameters $b>1, s\geq 1$ where $N=s \cdot N'$ for some $N' \in \mathbb{N},$ privacy parameters $\epsilon,\delta,$ $\ell_2$ row norm bound $B$;
    \Statex \textbf{Output:} weighted Sketch $C=(SA, w) \in \mathbb{R}^{r \times d}$ with $r$ rows.;
    \State $\sigma = \frac{2Bh_m}{\epsilon}\sqrt{2 \ln(1.25/\delta)},$ where $h_m = O(\log_b n)$
    \State $\hat{A} = \begin{bmatrix}
        A \\
        \eta
    \end{bmatrix},$ where each row of $\eta$ is $\eta_i \sim \mathcal{N}(0, \sigma^2 I_{d+1}).$
    \For{$h=0\ldots h_m$} \Comment{construct levels $0, \dots h_m$ of the sketch}
 		\State initialize sketch $S\hat{A}_h=  0$ at level $h$ ;
 		\State initialize weights $w_h=b^h \cdot \mathbf{1} \in \mathbb{R}^{N}$ at level $h$;
 	\EndFor

 	\State set $w_0=\frac{w_0}{s}$; \Comment{adapt weights on level $0$ to sparsity $s$}
    
 	\For{$i=1\ldots n$} \Comment{sketch the data}
 
 	        \For{$l=1\ldots s$} \Comment{densify level $0$}
 	            \State draw a random number $B_i \in [N']$;
 		        \State add $\hat{A}_{i,:}$ to the $((l-1) \cdot N'+ B_i)$-th row of $SA_0$;
 		    \EndFor
 		    
 		    \State assign $\hat{A}_{i,:}$ to level $h\in[1, h_m-1]$ with probability $p_h=\frac{1}{b^h}$; 
 	        \State draw a random number $B_i \in [N]$; 
 		    \State add $\hat{A}_{i,:}$ to the $B_i$-th row of $S\hat{A}_h$;
 		    \State add $S\hat{A}_{i,:}$ to uniform sampling level $h_m$ with probability $p_{h_m}=\frac{1}{b^{h_m}}$; 
 		\EndFor
 % \EndFor
 	\State  Set $S\hat{A}=(S\hat{A}_0, S\hat{A}_1, \ldots S\hat{A}_{h_m} )$;
 	\State  Set $w= (w_0, w_1, \ldots w_{h_m} )$;
 	\State  \textbf{return} $C = (S\hat{A}, w)$;
  \end{algorithmic}
\end{algorithm}

In Algorithm \ref{alg:dp_l1_sketch}, a row can be sampled at most $h_m$ times. Therefore, the 
 $\ell_2$ sensitivity of the applying the sketch is at most $2 B \sqrt{h_m}.$ We can now state Theorem \ref{thm:dp_l1_sketch}.

 \begin{theorem}\label{thm:dp_l1_sketch}
     Algorithm \ref{alg:dp_l1_sketch} satisfies $(\epsilon,\delta)$-differential privacy and outputs a weighted sketch  $C = (S \hat{A},w).$ Solving the $\ell_1$ regression problem on $S\hat{A}$ is same as solving a sketched regularized $\ell_1$ regression problem where the regularization coefficient is 
     \begin{align}\label{eqn:main_dp_add_error}
         ||\eta \beta_{-1}||_1 \leq \frac{2B r \log r \sqrt{2  h_m \ln(1.25/\delta)}}{\epsilon} ||\beta_{-1}||_1.
     \end{align}
     with constant probability.
 \end{theorem}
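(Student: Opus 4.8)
The plan is to prove the three assertions in turn: $(\epsilon,\delta)$-privacy of Algorithm~\ref{alg:dp_l1_sketch}, the identification of the sketched objective with a sketch of a regularized $\ell_1$ problem, and the tail bound~\eqref{eqn:main_dp_add_error} on the penalty term. Throughout write $\hat A = [A;\eta]$ and $S = [S_1 ; S_2]$, where $S_1 \in \{0,1\}^{r \times n}$ and $S_2 \in \{0,1\}^{r \times r\log r}$ are the blocks of the (data-oblivious) sketch acting on the rows of $A$ and of $\eta$, so that the released matrix is $S\hat A = S_1A + S_2\eta$.

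\textbf{Privacy.} First I would condition on the entire sketch structure $S$ (the bucket choices at every level, which carry no information about the data). Then $A \mapsto S_1A$ is deterministic, and since neighbours differ in one row, that row enters at most $h_m$ of the $r$ buckets, the sketch entries lie in $\{0,1\}$, and each row of $A$ has $\ell_2$ norm at most $B$; hence the $\ell_2$ (Frobenius) sensitivity of $A \mapsto S_1A$ is at most $2B\sqrt{h_m}$, as stated just before the theorem. The number of noise rows, $r\log r$, is the coupon-collector bound guaranteeing that, with high probability over $S$, every one of the $r$ output buckets absorbs at least one fresh row of $\eta$; on that event $\mathrm{Cov}(S_2\eta)$ dominates $\sigma^2 I$ with $\sigma = \frac{2B\sqrt{h_m}}{\epsilon}\sqrt{2\ln(1.25/\delta)}$, so in distribution $S\hat A = \big(S_1A + \mathcal N(0,\sigma^2 I)\big) + (\text{independent noise})$, and $(\epsilon,\delta)$-DP follows from the Gaussian mechanism together with closure under post-processing.

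\textbf{Reduction to a sketched regularized problem.} Since $\hat A\tilde\beta_{-1}$ is the vertical concatenation of $A\tilde\beta_{-1}$ and $\eta\tilde\beta_{-1}$, we have $||\hat A\tilde\beta_{-1}||_1 = ||A\tilde\beta_{-1}||_1 + ||\eta\tilde\beta_{-1}||_1 = ||X\tilde\beta - y||_1 + ||\eta\tilde\beta_{-1}||_1$. The sketch of \citet{munteanu2023almost} is an $O(1)$-approximate sketch for $\ell_1$ regression on any fixed matrix, here taken to be $\hat A$; hence minimising $||S\hat A\tilde\beta_{-1}||_1$ over $\tilde\beta$ coincides, up to the $O(1)$ sketching distortion, with minimising $||X\tilde\beta - y||_1 + ||\eta\tilde\beta_{-1}||_1$ — that is, an $\ell_1$ regression penalised by the data-oblivious term $||\eta\tilde\beta_{-1}||_1$, exactly the chain already recorded in~\eqref{eqn:dp_l1sketch_loss}. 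The ``regularization coefficient'' of this problem is then the quantity that bounds $||\eta\beta_{-1}||_1$ in terms of $||\beta_{-1}||_1$.

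\textbf{The tail bound.} The remaining (and only genuinely computational) step invokes Lemma~\ref{lem:l1_tail_bound}. Each row $\eta_i \sim \mathcal N(0,\sigma^2 I_{d+1})$ with the $\sigma$ above, so $\eta_i\beta_{-1} \sim \mathcal N(0,\sigma^2||\beta_{-1}||_2^2)$ and $||\eta\beta_{-1}||_1 = \sum_{i=1}^{r\log r} |\eta_i\beta_{-1}|$ is a sum of $r\log r$ i.i.d.\ half-normal variables whose underlying standard deviation is $\sigma||\beta_{-1}||_2$. Applying Lemma~\ref{lem:l1_tail_bound} with its ``$r$'' replaced by $r\log r$ and its ``$\sigma$'' by $\sigma||\beta_{-1}||_2$ gives, with probability at least $3/4$,
\begin{align*}
||\eta\beta_{-1}||_1 \;&\le\; r\log r \cdot \sigma\,||\beta_{-1}||_2 \;=\; \frac{2B\,r\log r\,\sqrt{2h_m\ln(1.25/\delta)}}{\epsilon}\,||\beta_{-1}||_2 \\
&\le\; \frac{2B\,r\log r\,\sqrt{2h_m\ln(1.25/\delta)}}{\epsilon}\,||\beta_{-1}||_1 ,
\end{align*}
the last inequality using $||\beta_{-1}||_2 \le ||\beta_{-1}||_1$; this is~\eqref{eqn:main_dp_add_error}. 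I expect this estimate to be routine and the privacy reduction to be the main obstacle: one must argue carefully that prepending Gaussian rows to $A$ before running the multi-level Munteanu sketch really reproduces a Gaussian mechanism on $S_1A$ — in particular handling the correlations the multi-level bucketing induces among the rows of $S_2\eta$ (so that $\mathrm{Cov}(S_2\eta) \succeq \sigma^2 I$ actually holds), making the ``every bucket is covered by noise'' step precise for the deep, rarely sampled levels, and pinning down the exact sensitivity constant (the role of the densification parameter $s$, and whether the noise scale is driven by $\sqrt{h_m}$ or $h_m$).
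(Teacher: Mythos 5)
Your proof takes essentially the same route as the paper's (much terser) argument: $(\epsilon,\delta)$-DP via the Gaussian mechanism with $\ell_2$ sensitivity $2B\sqrt{h_m}$, the decomposition $||\hat{A}\beta_{-1}||_1 = ||X\beta - y||_1 + ||\eta\beta_{-1}||_1$ already recorded in \eqref{eqn:dp_l1sketch_loss}, and the half-normal tail bound of Lemma~\ref{lem:l1_tail_bound} (equivalently Lemma~\ref{lem:add_err_bound_simple} with $\sigma$ rescaled by $\sqrt{h_m}$), so it is correct in the same sense and at the same level of rigor as the published proof. The caveats you flag at the end --- noise coverage of the rarely-sampled deep levels and the $h_m$ versus $\sqrt{h_m}$ discrepancy in the noise scale of Algorithm~\ref{alg:dp_l1_sketch} --- are real issues that the paper's one-line proof does not address either, so your write-up is, if anything, the more careful of the two.
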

 \begin{proof}
     The differential privacy guarantee comes from the application of the Gaussian mechanism. The $\ell_2$ sensitivity of the sketch is $2B\sqrt{h_m}.$ Applying Lemma \ref{lem:add_err_bound_simple}, we get the bound on the regularization coefficient.
 \end{proof}

\section{Conclusion and Future Work}
In this paper, we proposed a few techniques for releasing differentially private sketches of the dataset. We utilized the JL transform as well as sparse sketches for computing the sketches. We observed that in most of the cases, introducing noise in order to preserve privacy results in having to solve sketch regularized regression problems. The bounds on the regularization coefficients highlight the regularization effect for guaranteeing privacy. The coefficients could potentially be large, which could lead to a large drop in utility of the solutions on the private sketches. For the JL transform, we showed that it is possible to get unregularized regression formulations in certain cases. While for the $\ell_2$ regression, Algorithm \ref{alg:private_JL} works, we explored the feasibility of releasing private CountSketch as well.  A future direction of work is to come up with similar constructions for the other problems, especially $\ell_1$ regression, as well. Specifically, we would like to have a better noise addition mechanism such that the regularization coefficient does not become large.

% \subsubsection*{Broader Impact Statement}
% In this optional section, TMLR encourages authors to discuss possible repercussions of their work,
% notably any potential negative impact that a user of this research should be aware of. 
% Authors should consult the TMLR Ethics Guidelines available on the TMLR website
% for guidance on how to approach this subject.

% \subsubsection*{Author Contributions}
% If you'd like to, you may include a section for author contributions as is done
% in many journals. This is optional and at the discretion of the authors. Only add
% this information once your submission is accepted and deanonymized. 

% \subsubsection*{Acknowledgments}
% Use unnumbered third level headings for the acknowledgments. All
% acknowledgments, including those to funding agencies, go at the end of the paper.
% Only add this information once your submission is accepted and deanonymized. 

\bibliography{main}

\begin{thebibliography}{32}
\providecommand{\natexlab}[1]{#1}
\providecommand{\url}[1]{\texttt{#1}}
\expandafter\ifx\csname urlstyle\endcsname\relax
  \providecommand{\doi}[1]{doi: #1}\else
  \providecommand{\doi}{doi: \begingroup \urlstyle{rm}\Url}\fi

\bibitem[Blocki et~al.(2012)Blocki, Blum, Datta, and Sheffet]{jlt2012blocki}
Jeremiah Blocki, Avrim Blum, Anupam Datta, and Or~Sheffet.
\newblock The johnson-lindenstrauss transform itself preserves differential privacy.
\newblock In \emph{Proceedings of the 2012 IEEE 53rd Annual Symposium on Foundations of Computer Science}, FOCS '12, pp.\  410–419, USA, 2012. IEEE Computer Society.
\newblock ISBN 9780769548746.
\newblock \doi{10.1109/FOCS.2012.67}.
\newblock URL \url{https://doi.org/10.1109/FOCS.2012.67}.

\bibitem[Brown et~al.(2024)Brown, Hayase, Hopkins, Kong, Liu, Oh, Perdomo, and Smith]{brown2024insufficientstatisticsperturbationstable}
Gavin Brown, Jonathan Hayase, Samuel Hopkins, Weihao Kong, Xiyang Liu, Sewoong Oh, Juan~C. Perdomo, and Adam Smith.
\newblock Insufficient statistics perturbation: Stable estimators for private least squares, 2024.
\newblock URL \url{https://arxiv.org/abs/2404.15409}.

\bibitem[Burkhardt et~al.(2025)Burkhardt, Keller, Orlandi, and Schwiegelshohn]{burkhardt2025distributeddifferentiallyprivatedata}
Jakob Burkhardt, Hannah Keller, Claudio Orlandi, and Chris Schwiegelshohn.
\newblock Distributed differentially private data analytics via secure sketching, 2025.
\newblock URL \url{https://arxiv.org/abs/2412.00497}.

\bibitem[Clarkson(2005)]{subgradient2005clarkson}
Kenneth~L. Clarkson.
\newblock Subgradient and sampling algorithms for l1 regression.
\newblock In \emph{Proceedings of the Sixteenth Annual ACM-SIAM Symposium on Discrete Algorithms}, SODA '05, pp.\  257–266, USA, 2005. Society for Industrial and Applied Mathematics.
\newblock ISBN 0898715857.

\bibitem[Clarkson \& Woodruff(2015)Clarkson and Woodruff]{mestimator2015clarkson}
Kenneth~L. Clarkson and David~P. Woodruff.
\newblock \emph{Sketching for <italic>M</italic>-Estimators: A Unified Approach to Robust Regression}, pp.\  921--939.
\newblock Society for Industrial and Applied Mathematics, 2015.
\newblock \doi{10.1137/1.9781611973730.63}.
\newblock URL \url{https://epubs.siam.org/doi/abs/10.1137/1.9781611973730.63}.

\bibitem[Clarkson \& Woodruff(2017)Clarkson and Woodruff]{input2012clarkson}
Kenneth~L. Clarkson and David~P. Woodruff.
\newblock Low-rank approximation and regression in input sparsity time.
\newblock \emph{J. ACM}, 63\penalty0 (6), January 2017.
\newblock ISSN 0004-5411.
\newblock \doi{10.1145/3019134}.
\newblock URL \url{https://doi.org/10.1145/3019134}.

\bibitem[Clarkson et~al.(2016)Clarkson, Drineas, Magdon-Ismail, Mahoney, Meng, and Woodruff]{fast2016clarkson}
Kenneth~L. Clarkson, Petros Drineas, Malik Magdon-Ismail, Michael~W. Mahoney, Xiangrui Meng, and David~P. Woodruff.
\newblock The fast cauchy transform and faster robust linear regression.
\newblock \emph{SIAM Journal on Computing}, 45\penalty0 (3):\penalty0 763--810, 2016.
\newblock \doi{10.1137/140963698}.
\newblock URL \url{https://doi.org/10.1137/140963698}.

\bibitem[Coleman \& Shrivastava(2021)Coleman and Shrivastava]{onepass2021coleman}
Benjamin Coleman and Anshumali Shrivastava.
\newblock A one-pass distributed and private sketch for kernel sums with applications to machine learning at scale.
\newblock In \emph{Proceedings of the 2021 ACM SIGSAC Conference on Computer and Communications Security}, CCS '21, pp.\  3252–3265, New York, NY, USA, 2021. Association for Computing Machinery.
\newblock ISBN 9781450384544.
\newblock \doi{10.1145/3460120.3485255}.
\newblock URL \url{https://doi.org/10.1145/3460120.3485255}.

\bibitem[Dwork et~al.(2006)Dwork, McSherry, Nissim, and Smith]{dwork2006privacy}
Cynthia Dwork, Frank McSherry, Kobbi Nissim, and Adam Smith.
\newblock Calibrating noise to sensitivity in private data analysis.
\newblock In \emph{Proceedings of the Third Conference on Theory of Cryptography}, TCC'06, pp.\  265–284, Berlin, Heidelberg, 2006. Springer-Verlag.
\newblock ISBN 3540327312.
\newblock \doi{10.1007/11681878_14}.
\newblock URL \url{https://doi.org/10.1007/11681878_14}.

\bibitem[Feldman et~al.(2009)Feldman, Fiat, Kaplan, and Nissim]{private2009feldman}
Dan Feldman, Amos Fiat, Haim Kaplan, and Kobbi Nissim.
\newblock Private coresets.
\newblock In \emph{Proceedings of the Forty-First Annual ACM Symposium on Theory of Computing}, STOC '09, pp.\  361–370, New York, NY, USA, 2009. Association for Computing Machinery.
\newblock ISBN 9781605585062.
\newblock \doi{10.1145/1536414.1536465}.
\newblock URL \url{https://doi.org/10.1145/1536414.1536465}.

\bibitem[Feldman et~al.(2017)Feldman, Xiang, Zhu, and Rus]{private2017feldman}
Dan Feldman, Chongyuan Xiang, Ruihao Zhu, and Daniela Rus.
\newblock Coresets for differentially private k-means clustering and applications to privacy in mobile sensor networks.
\newblock In \emph{Proceedings of the 16th ACM/IEEE International Conference on Information Processing in Sensor Networks}, IPSN '17, pp.\  3–15, New York, NY, USA, 2017. Association for Computing Machinery.
\newblock ISBN 9781450348904.
\newblock \doi{10.1145/3055031.3055090}.
\newblock URL \url{https://doi.org/10.1145/3055031.3055090}.

\bibitem[Gordon et~al.(2006)Gordon, Zhu, and Wang]{gordon2006lad}
Michael~D. Gordon, Ji~Zhu, and Li~Wang.
\newblock { Regularized Least Absolute Deviations Regression and an Efficient Algorithm for Parameter Tuning }.
\newblock In \emph{Sixth International Conference on Data Mining (ICDM'06)}, pp.\  690--700, Los Alamitos, CA, USA, December 2006. IEEE Computer Society.
\newblock \doi{10.1109/ICDM.2006.134}.
\newblock URL \url{https://doi.ieeecomputersociety.org/10.1109/ICDM.2006.134}.

\bibitem[Johnson \& Lindenstrauss(1984)Johnson and Lindenstrauss]{jlt1984jlt}
William Johnson and Joram Lindenstrauss.
\newblock Extensions of lipschitz maps into a hilbert space.
\newblock \emph{Contemporary Mathematics}, 26:\penalty0 189--206, 01 1984.
\newblock \doi{10.1090/conm/026/737400}.

\bibitem[Ledoux \& Talagrand(1991)Ledoux and Talagrand]{Ledoux1991ProbabilityIB}
Michel Ledoux and Michel Talagrand.
\newblock Probability in banach spaces: Isoperimetry and processes.
\newblock In \emph{Probability in Banach Spaces: Isoperimetry and Processes}, 1991.
\newblock URL \url{https://api.semanticscholar.org/CorpusID:118526268}.

\bibitem[Li et~al.(2019)Li, Liu, Sekar, and Smith]{free2019tian}
Tian Li, Zaoxing Liu, Vyas Sekar, and Virginia Smith.
\newblock Privacy for free: Communication-efficient learning with differential privacy using sketches.
\newblock \emph{CoRR}, abs/1911.00972, 2019.
\newblock URL \url{http://arxiv.org/abs/1911.00972}.

\bibitem[Liu et~al.(2024)Liu, Mao, Zhang, and Zhang]{liu2024frappe}
Weidong Liu, Xiaojun Mao, Xiaofei Zhang, and Xin Zhang.
\newblock Efficient sparse least absolute deviation regression with differential privacy.
\newblock \emph{Trans. Info. For. Sec.}, 19:\penalty0 2328–2339, January 2024.
\newblock ISSN 1556-6013.
\newblock \doi{10.1109/TIFS.2023.3349054}.
\newblock URL \url{https://doi.org/10.1109/TIFS.2023.3349054}.

\bibitem[Liu et~al.(2021)Liu, Kong, and Oh]{liu2021differentialprivacyrobuststatistics}
Xiyang Liu, Weihao Kong, and Sewoong Oh.
\newblock Differential privacy and robust statistics in high dimensions, 2021.
\newblock URL \url{https://arxiv.org/abs/2111.06578}.

\bibitem[Meng \& Mahoney(2013)Meng and Mahoney]{low2013meng}
Xiangrui Meng and Michael~W. Mahoney.
\newblock Low-distortion subspace embeddings in input-sparsity time and applications to robust linear regression.
\newblock In \emph{Proceedings of the Forty-Fifth Annual ACM Symposium on Theory of Computing}, STOC '13, pp.\  91–100, New York, NY, USA, 2013. Association for Computing Machinery.
\newblock ISBN 9781450320290.
\newblock \doi{10.1145/2488608.2488621}.
\newblock URL \url{https://doi.org/10.1145/2488608.2488621}.

\bibitem[Munteanu et~al.(2021)Munteanu, Omlor, and Woodruff]{pmlr-v139-munteanu21a}
Alexander Munteanu, Simon Omlor, and David Woodruff.
\newblock Oblivious sketching for logistic regression.
\newblock In Marina Meila and Tong Zhang (eds.), \emph{Proceedings of the 38th International Conference on Machine Learning}, volume 139 of \emph{Proceedings of Machine Learning Research}, pp.\  7861--7871. PMLR, 18--24 Jul 2021.
\newblock URL \url{https://proceedings.mlr.press/v139/munteanu21a.html}.

\bibitem[Munteanu et~al.(2023)Munteanu, Omlor, and Woodruff]{munteanu2023almost}
Alexander Munteanu, Simon Omlor, and David Woodruff.
\newblock Almost linear constant-factor sketching for \${\textbackslash}ell\_1\$ and logistic regression.
\newblock In \emph{The Eleventh International Conference on Learning Representations}, 2023.
\newblock URL \url{https://openreview.net/forum?id=gu-SC0dpkvw}.

\bibitem[Nelson \& Nguyên(2013)Nelson and Nguyên]{osnap2013nelson}
Jelani Nelson and Huy~L. Nguyên.
\newblock Osnap: Faster numerical linear algebra algorithms via sparser subspace embeddings.
\newblock In \emph{2013 IEEE 54th Annual Symposium on Foundations of Computer Science}, pp.\  117--126, 2013.
\newblock \doi{10.1109/FOCS.2013.21}.

\bibitem[Sarlos(2006)]{rp2006sarlos}
Tamas Sarlos.
\newblock Improved approximation algorithms for large matrices via random projections.
\newblock In \emph{2006 47th Annual IEEE Symposium on Foundations of Computer Science (FOCS'06)}, pp.\  143--152, 2006.
\newblock \doi{10.1109/FOCS.2006.37}.

\bibitem[Sheffet(2017)]{dpols2017sheffet}
Or~Sheffet.
\newblock Differentially private ordinary least squares.
\newblock In \emph{Proceedings of the 34th International Conference on Machine Learning - Volume 70}, ICML'17, pp.\  3105–3114. JMLR.org, 2017.

\bibitem[Sheffet(2019)]{pmlr-v98-sheffet19a}
Or~Sheffet.
\newblock Old techniques in differentially private linear regression.
\newblock In Aurélien Garivier and Satyen Kale (eds.), \emph{Proceedings of the 30th International Conference on Algorithmic Learning Theory}, volume~98 of \emph{Proceedings of Machine Learning Research}, pp.\  789--827. PMLR, 22--24 Mar 2019.
\newblock URL \url{https://proceedings.mlr.press/v98/sheffet19a.html}.

\bibitem[Sohler \& Woodruff(2011)Sohler and Woodruff]{subspace2011sohler}
Christian Sohler and David~P. Woodruff.
\newblock Subspace embeddings for the l1-norm with applications.
\newblock In \emph{Proceedings of the Forty-Third Annual ACM Symposium on Theory of Computing}, STOC '11, pp.\  755–764, New York, NY, USA, 2011. Association for Computing Machinery.
\newblock ISBN 9781450306911.
\newblock \doi{10.1145/1993636.1993736}.
\newblock URL \url{https://doi.org/10.1145/1993636.1993736}.

\bibitem[Upadhyay(2017)]{upadhyay2017differentiallyprivatelinearalgebra}
Jalaj Upadhyay.
\newblock Differentially private linear algebra in the streaming model, 2017.
\newblock URL \url{https://arxiv.org/abs/1409.5414}.

\bibitem[Varshney et~al.(2022)Varshney, Thakurta, and Jain]{varshney2022sgd}
Prateek Varshney, Abhradeep Thakurta, and Prateek Jain.
\newblock (nearly) optimal private linear regression for sub-gaussian data via adaptive clipping.
\newblock In Po-Ling Loh and Maxim Raginsky (eds.), \emph{Proceedings of Thirty Fifth Conference on Learning Theory}, volume 178 of \emph{Proceedings of Machine Learning Research}, pp.\  1126--1166. PMLR, 02--05 Jul 2022.
\newblock URL \url{https://proceedings.mlr.press/v178/varshney22a.html}.

\bibitem[Wang \& Xu(2022)Wang and Xu]{wang2022differentiallyprivateell1normlinear}
Di~Wang and Jinhui Xu.
\newblock Differentially private $\ell_1$-norm linear regression with heavy-tailed data, 2022.
\newblock URL \url{https://arxiv.org/abs/2201.03204}.

\bibitem[Wang(2018)]{wang2018revisitingdifferentiallyprivatelinear}
Yu-Xiang Wang.
\newblock Revisiting differentially private linear regression: optimal and adaptive prediction \& estimation in unbounded domain, 2018.
\newblock URL \url{https://arxiv.org/abs/1803.02596}.

\bibitem[Woodruff(2014)]{sketch2014woodruff}
David~P. Woodruff.
\newblock Sketching as a tool for numerical linear algebra.
\newblock \emph{Found. Trends Theor. Comput. Sci.}, 10\penalty0 (1–2):\penalty0 1–157, October 2014.
\newblock ISSN 1551-305X.
\newblock \doi{10.1561/0400000060}.
\newblock URL \url{https://doi.org/10.1561/0400000060}.

\bibitem[Woodruff \& Zhang(2014)Woodruff and Zhang]{woodruff2014subspaceembeddingsellpregressionusing}
David~P. Woodruff and Qin Zhang.
\newblock Subspace embeddings and $\ell_p$-regression using exponential random variables, 2014.
\newblock URL \url{https://arxiv.org/abs/1305.5580}.

\bibitem[Zhao et~al.(2022)Zhao, Qiao, Redberg, Agrawal, El~Abbadi, and Wang]{dpsketch2022zhao}
Fuheng Zhao, Dan Qiao, Rachel Redberg, Divyakant Agrawal, Amr El~Abbadi, and Yu-Xiang Wang.
\newblock Differentially private linear sketches: efficient implementations and applications.
\newblock In \emph{Proceedings of the 36th International Conference on Neural Information Processing Systems}, NIPS '22, Red Hook, NY, USA, 2022. Curran Associates Inc.
\newblock ISBN 9781713871088.

\end{thebibliography}
\bibliographystyle{tmlr}

% \appendix
% \section{Appendix}
% You may include other additional sections here.

\end{document}